\relax
\documentclass[letterpaper]{article} 
\usepackage{subfigure}
\usepackage{algorithm}
\usepackage{amsmath,amsthm}
\usepackage{threeparttable}
\usepackage{booktabs,multirow}
\usepackage[noend]{algorithmic}

\usepackage{aaai20}  
\usepackage{times}  
\usepackage{helvet} 
\usepackage{courier}  
\usepackage[hyphens]{url}  
\usepackage{graphicx} 
\urlstyle{rm} 
\usepackage{graphicx}  
\frenchspacing  
\setlength{\pdfpagewidth}{8.5in}  
\setlength{\pdfpageheight}{11in}  
 \pdfinfo{
/Title (Question Answering over Knowledge Graphs via Structural Query Patterns)
/Author (AAAI Press Staff, Pater Patel Schneider, Sunil Issar, J. Scott Penberthy, George Ferguson, Hans Guesgen)
} 

\newcommand{\nop}[1]{}
\newtheorem{definition}{Definition}[section]

\newtheorem{lemma}{Lemma}[section]
\newtheorem{problem statement}{\bf Problem Statement}
\newtheorem{example}{Example}

\setcounter{secnumdepth}{3} 

%
\setlength\titlebox{2.5in} 
\title{Question Answering over Knowledge Graphs via Structural Query Patterns}
\author{%
{{Weiguo Zheng${^1}$}, Mei Zhang{${^2}$}}%
\vspace{1.6mm}\\
\fontsize{10}{10}\selectfont\itshape $~^{1}$Fudan University, China;\\ \fontsize{10}{10}\selectfont\itshape $~^{2}$Wuhan University of Science and Technology, China\\
\vspace{0.05in}
 \fontsize{10}{10}\selectfont\ttfamily\upshape \hspace{-0.2in}  zhengweiguo@fudan.edu.cn, zhangmeiontoweb@gmail.com
\\}

 \begin{document}

\maketitle

\begin{abstract}
Natural language question answering over knowledge graphs is an important and interesting task as it enables common users to gain accurate answers in an easy and intuitive manner. However, it remains a challenge to bridge the gap between unstructured questions and structured knowledge graphs. To address the problem, a natural discipline is building a structured query to represent the input question. Searching the structured query over the knowledge graph can produce answers to the question. Distinct from the existing methods that are based on semantic parsing or templates, we propose an effective approach powered by a novel notion, structural query pattern, in this paper. Given an input question, we first generate its query sketch that is compatible with the underlying structure of the knowledge graph. Then, we complete the query graph by labeling the nodes and edges under the guidance of the structural query pattern. Finally, answers can be retrieved by executing the constructed query graph over the knowledge graph. 
Evaluations on three question-answering benchmarks show that our proposed approach outperforms state-of-the-art methods significantly.
\end{abstract}

\section{Introduction}\label{sec:introduction}

Querying knowledge graphs like DBpedia, Freebase, and Yago  through natural language questions has received increasing attentions these years.
 In order to bridge the gap between unstructured questions and the structured knowledge graph $G$, a widely used discipline is building a structured query graph $q$ to represent the input question such that $q$ can be executed on $G$ to retrieve answers to the question \cite{DBLP:conf/emnlp/BerantCFL13,DBLP:conf/sigmod/ZhengZLYSZ15,DBLP:journals/tkde/Hu0YWZ18}.
To the end, there are two streams of researches, i.e., semantic parsing based methods and template based methods, both of which suffer from several problems as discussed next.


\noindent \textbf{\textit{Semantic parsing based methods}.}
The aim of semantic parsing is translating the natural language utterances into machine-executable logical forms or programs \cite{DBLP:conf/acl/GardnerDISZ18}. For example, 
the phrase ``director of Philadelphia'' may be parsed as $\lambda x.{\textit{Director}}(x)$ $\wedge$ $\textit{DirectedBy}(\textit{Philadelphia}(\textit{film}), x)$, where $\textit{Director}$, $\textit{DirectedBy}$, and $\textit{Philadelphia}(\textit{film})$ are grounded predicates and entities in the specific knowledge graph.
 Traditional semantic parsers \cite{DBLP:conf/uai/ZettlemoyerC05,DBLP:conf/acl/WongM07,DBLP:conf/emnlp/KwiatkowksiZGS10} require a lot of annotated training examples in the form of syntactic structures or logical forms, which is especially expensive to collect for large-scale knowledge graphs. Another problem is the mismatch between the generated logic forms and structures (including entities and predicates) that are specified in knowledge graphs \cite{DBLP:conf/emnlp/KwiatkowskiCAZ13,DBLP:conf/acl/BerantL14,DBLP:journals/tacl/ReddyLS14}.
 In order to solve the problems above,
 several efforts have been devoted to lifting these limitations \cite{DBLP:conf/acl/YihCHG15,DBLP:conf/coling/BaoDYZZ16}. They leverage the knowledge graph in an early stage by applying deep convolutional neural network models to match questions and predicate sequences. It is required to identify the topic entity $e$ and a core inferential chain that is a directed path from $e$ to the answer. Then the final executable query is iteratively constructed based on the detected chain. However, it is hard to pick out the correct inferential chains (35\% of the errors are caused by the incorrect inferential chains in STAGG \cite{DBLP:conf/acl/YihCHG15}). Moreover, it is unreasonable to restrain the chain as a directed path since it may be a general path regardless of the direction in many cases. For instance, Figure~\ref{fig:query graphs} presents the query graphs for two questions ``\textit{$q_1$: Who is starring in Spanish movies produced by Benicio del Toro}?'' and ``\textit{$q_2$: Which artists were born on the same date as Rachel Stevens?}'' targeting DBpedia, neither of which contains directed paths. In addition, the search space is uncertain and difficult to determine when to terminate.


 \begin{figure}[h]
  \centering
  \subfigure[Query graph for $q_1$]{
    \label{fig:affected subgraph Ga} 
    \includegraphics[scale=0.55 ]{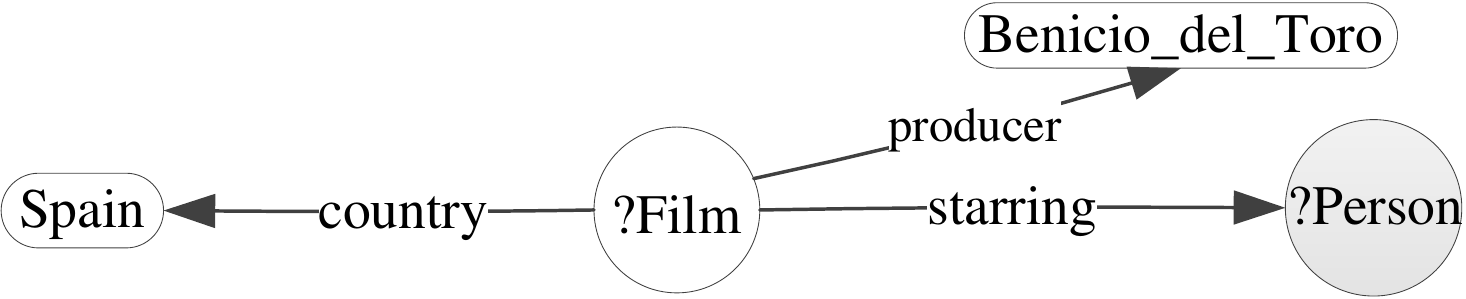}}
  \subfigure[Query graph for $q_2$]{
    \label{fig:affected subgraph Gr} 
    \includegraphics[scale=0.55 ]{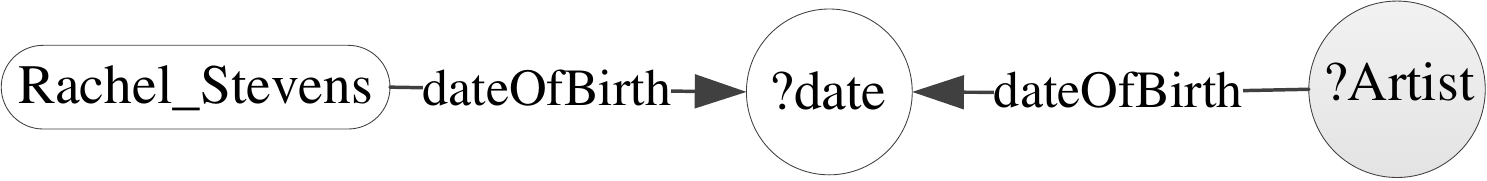}}
  \caption{Query graphs for questions $q_1$ and $q_2$}
  \label{fig:query graphs} 
\end{figure}

Instead of training semantic parsers, several methods that are built upon the existing dependency parses have been proposed \cite{DBLP:conf/sigmod/ZouHWYHZ14,DBLP:conf/clef/RusetiMRT15}. They try to generate the query graphs according to the dependency parsing results and pre-defined rules. Clearly, it is extremely difficult to enumerate all the rules and eliminate conflicts among the rules.

\noindent \textbf{\textit{Template-based methods.}}
A bunch of researches focus on using templates to construct query graphs \cite{DBLP:conf/www/UngerBLNGC12,DBLP:conf/ijcai/CuiXW16,DBLP:conf/www/AbujabalYRW17,DBLP:journals/pvldb/ZhengYZC18}, where a template consists of two parts: the natural language pattern and SPARQL query pattern. The two kinds of patterns are linked through the mappings between their slots. In the offline phase, the templates are manually or automatically constructed. In the online phase, it tries to retrieve the template that maps the input question. Then the template is instantiated by filling the slots with entities identified from the question. The generated query graph is likely to be correct if the truly matched template is picked out. Nevertheless, the coverage of the templates may be limited due to the variability of natural language and a large number of triples in a knowledge graph, which will lead to the problem that many questions cannot be answered correctly.  Furthermore, automatically constructing and managing large-scale high-quality templates for complex questions remain open problems.

\noindent \textbf{Our Approach and Contributions.}
As discussed above, the semantic parsing based algorithms show good scalability to answer more questions, while the template-based methods exhibit advantage in precision. Hence, it is desired to design an effective approach that can integrate both of the two strengths.
%
To this end, there are at least two challenges to be addressed.

\noindent \emph{Challenge 1. Devising an appropriate representation that can capture the query intention and is easy to ground to the underlying knowledge graph.} The representation is required to intuitively match or reconstruct the query intention of the input question. Meanwhile, it is natural to be grounded to the knowledge graph, which is beneficial to improve the precision of the system.

\noindent \emph{Challenge 2. Completeness of representations should be as high as possible.} Although the template-based methods perform good in terms of precision, they suffer the problem of template deficiency in real scenarios. Guaranteeing the completeness of representations is crucial to enhance the processing capacity. Moreover, in order to reduce the cost of building such a question answering system, the representations should be easy to construct.

Rather than using the semantic parsing or templates, \emph{we propose a novel framework based on structural query patterns to build a query graph for the input question in this paper}. It comprises of three stages, i.e., structural query pattern (shorted by \textit{SQP}) generation, \textit{SQP}-guided query construction, and constraint augmentation.
In principle, instead of parsing the question $q$ into a logic form that is equipped with specific semantic arguments (including entities and predicates), we just need to identify the shape or sketch of $q$'s query graph in the first stage. It benefits from two folds:
(1) The number of structural patterns for most questions is limited so that they can be enumerated in advance. For instance, there are 4 structural patterns for the questions in LC-QuAD \cite{DBLP:conf/semweb/TrivediMDL17} that is a benchmark for complex question answering over DBpedia.
(2) It is easy to produce a structured pattern with high precision compared to generating complicated logic forms. In the second stage, we build the query graph by extending one entity that is identified according to the question $q$. The construction proceeds under the guidance of the structural pattern. Hence, the search space can be reduced rather than examining all the predicates adjacent to an entity. Furthermore, it is straightforward to determine whether the extension procedure can terminate or not. 
Finally, the constraints specified in the question $q$ are detected to produce the complete structured query for $q$.
Note that the procedure involves multiple steps, i.e., \textit{SQP} generation, entity linking, and relation selection. 
In summary, we make the following contributions in this paper:
\begin{itemize}
    \item We propose a novel framework based on structural query patterns to answer questions over knowledge graphs;
    \item We present an approach that generates a query graph for the input question by applying structural query patterns;
    \item Experimental results on two benchmarks show that our approach outperforms state-of-the-art algorihtms.
\end{itemize}

\section{Preliminaries}\label{sec:background}

\begin{figure*}[t]
\begin{center}
    \includegraphics[scale=0.32]{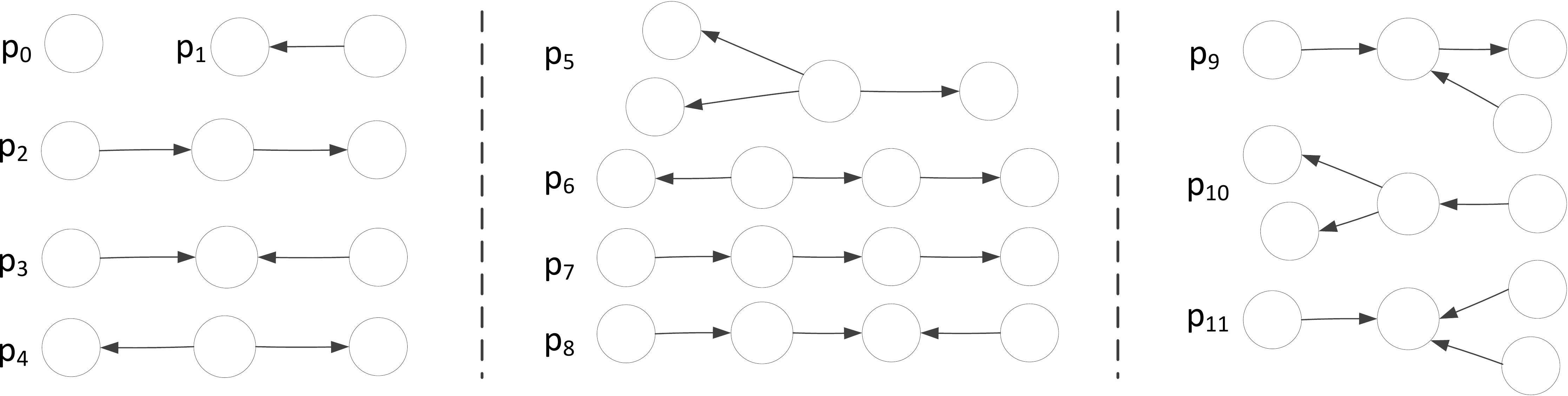}
   \caption{\small Structural query patterns.}
   \label{fig:structural query patterns}
   \vspace{-0.1in}
\end{center}
\end{figure*}



We aim to build a query graph $g$ for the question such that $g$ can be executed on the knowledge graph, where the {\emph{query graph}} $g$ is a graph representation of the input question and can be executed on $G$. Formally, it is defined as Definition~\ref{def:query graph}.

\begin{definition}\label{def:query graph}
(\emph{\textbf{Query graph, denoted by $g$}}). A query graph for the input question satisfies the following conditions: (1) each node in $g$ corresponds to an entity $e \in G$, a value $v \in G$, or a variable; (2) each edge in $g$ corresponds to a predicate or property in $G$; (3) $g$ is subgraph isomorphic to $G$ by letting the variable in $g$ match any node in $G$.
\end{definition}




The existing semantic parsing based methods map a natural language question $q$ to logical forms that contain phrases from $q$ or entities/predicates in the knowledge graph. The template-based algorithms bridge the gap between unstructured $q$ and structured $G$ by using templates. Different from them, we propose a novel approach by applying structural query patterns.


\begin{definition}
(\textbf{\emph{Structural query pattern, shorted as {\textit{SQP}}}}). The structural query pattern is the structural representation of the question $q$, that is the remaining structure obtained by removing all the node and edge labels from the query graph $g$ of $q$.
\end{definition}

Note that the type-node and its adjacent edges are removed as well. We can call a structural query pattern as a query sketch or a shape of the query graph.

Since most questions involve just a few entities in the knowledge graph $G$, the number of structural query patterns is very limited in the real scenario. Therefore, we can enumerate all these patterns in advance. We observe that each structural query pattern of all the questions in LC-QuAD \cite{DBLP:conf/semweb/TrivediMDL17}, QALD-7 \cite{DBLP:conf/esws/UsbeckNHKRN17}, QALD-8 \cite{DBLP:conf/semweb/UsbeckNCRN18}, and QALD-9 \cite{DBLP:conf/semweb/UsbeckGN018} contains 4 nodes at most. Note that LC-QuAD consists of 5,000 questions, 78\% of which are complex questions with multiple relations and entities. The structural query patterns consisting of 4 nodes at most are listed in Figure~\ref{fig:structural query patterns}. There are 12 structural query patterns in total, where the first pattern $p_0$ contains only one node as the type nodes are removed from the query graph as well. For instance, the structural query pattern for the question ``give me all the automobiles'' is $p_0$.
   The questions in LC-QuAD involve only structural query patterns $p_1$-$p_4$. The questions in QALD-7 and QALD-8 involve structural query patterns $p_0$-$p_6$ and $p_0$-$p_5$, respectively. The questions in QALD-9 involve structural query patterns $p_0$-$p_4$ and $p_7$. In comparison, to capture these questions precisely, it requires thousands of templates. Clearly, the structural query patterns exhibit a strong ability to capture the structural representations of natural language questions.

\section{Structural Query Pattern Recognition}\label{sec:structural pattern recognition}


Given a question $q$ in the online phase, we need to produce the structural query pattern (\textit{SQP}) for $q$. Since the structural query patterns have been listed in advance, we can take the problem of \textit{SQP} recognition as a classification task, where each entry corresponds to a structural query pattern.


\noindent\textbf{Data preparation.}
First of all, we prepare the training data carefully to enhance overall performance. Notice that our defined structural query patterns do not contain any specific labels including phrases, entities, predicates, and properties. However, a natural language question consists of a sequence of words carrying semantic meanings. To make it fit the classification model well, we use the syntax tree of each question rather than the question itself. By utilizing the existing syntactic parsing tools, e.g., Stanford parser \cite{DBLP:conf/emnlp/ChenM14}, we can get the syntactic structure of each question. To avoid the effect of specific words, we remove them from the syntactic parsing results and only retain the syntactic tags and their relations.




 %
 According to the structure of the SPARQL query, each question $q$ is assigned a category label that corresponds to one of the 12 structural query patterns listed in Figure~\ref{fig:structural query patterns}.
   Several question answering datasets are available, such as LC-QuAD and QALD, which provide both questions and the corresponding SPARQL queries. Thus it is easy to collect pairs of syntactic structure and the corresponding category label as the training data.


 \noindent \textbf{Model training.}
  In this phase, we train the classification model to predict the category label for an input question. In this paper, we choose two models Text CNN model \cite{DBLP:conf/emnlp/Kim14} and RNN Attention model \cite{DBLP:conf/naacl/YangYDHSH16} to train the data collected from the previous subsection. The Text CNN model performs well in short text classification. Its basic principle is learning task-specific vectors through fine-tuning to offer further
gains in performance.
   The RNN Attention model imports attention to capture the dependency of long text and can maintain key information from the text. However, these two models can only output a single label.

   The output layers of the two models above just return the label with the largest confidence score.
    In order to increase the ability to deliver the correct label, we modify the two models so that they can assign each label $l$ a confidence score that represents the probability of being the correct label. In the online phase, we use the top-$k$ labels with the highest score.




\noindent \textbf{Model Ensemble.}
Benefiting from the capability on capturing useful information in long text, the RNN Attention model performs better on complex and long questions which contains more than one entity. In contrast, we find that Text CNN model is better than RNN Attention model on dealing with short and simple questions.

\begin{example}
Let us consider the question ``Name the city whose province is Metropolitan City of Venice and has leader as Luigi Brugnaro?''. It is a complex long question as multiple relations and entities are involved. The prediction result of Text CNN model is pattern $p_2$ in Figure~\ref{fig:structural query patterns}. However, the correct pattern should be pattern $p_4$ as delivered by the RNN Attention model.
\end{example}

Since these two models exhibit different advantages when predicting category label, they can be assembled to make a prediction. We use a simple neural network as the training model to ensemble these two models. 
A soft max output layer is included to compute the top-$k$ structural query patterns with the largest score.
 The ranked list of \emph{SQPs}  returned by RNN Attention model and Text CNN model for each question are integrated as the training data. 


\section{Query Graph Generation}\label{sec:query formulation}


 Generally, a query graph contains one entity at least, which can help reduce the search space. Hence, we need to identify the entity $e$ from the knowledge graph, where $e$ corresponds to a phrase (named entity) in the question. It is actually the task of entity linking. Then the query graph is constructed by extending $e$ under the guidance of \emph{SQP}.

\subsection{Entity Linking}\label{subsec:entity linking}



We perform entity linking that finds the entity corresponding to a phrase in the question from the underlying knowledge graph. Conducting entity linking involves two steps, i.e., identify named entities in the question, and then find their matching entities in the target knowledge graph $G$.

 In the first step, we use the named entity recognition (shorted as NER) model \cite{DBLP:conf/naacl/LampleBSKD16} to recognize entity keywords in the given question, where the model is built based on bidirectional LSTMs and conditional random fields. The identified entity keywords are called entity phrases. In the second step, we link the entity phrase to the entity in $G$ by computing the similarity between the entity phrase and candidate entities. 
Note that it is not necessary to identify all the entities in the question as we just need one entity to locate the candidate subgraphs in the knowledge graph.


 We observe that there are two problems to be addressed, i.e., phrase truncation and multiple mapping entities. 
 Algorithm~\ref{alg:entity linking} outlines the procedure.

%
%

\noindent (1) \emph{\textbf{Phrase truncation}}. A phrase may be truncated, which will lead to the false entity phrase and mapping/missing entity. For instance, in the question ``Rashid Behbudov State Song Theatre and Baku Puppet Theatre can be found in which country?'' a truncated phrase ``Song Theatre'', which has no mapping entity in DBpeida, will be identified by the NER model \cite{DBLP:conf/naacl/LampleBSKD16}. In contrast, the correct entity phrase should be ``Rashid Behbudov State Song Theatre'' that is mapped to the entity $\langle$http://dbpedia.org/resource/Rashid\_Behbudov\_State\_Song\_ Theatre$\rangle$. To solve the problem, we extend the entity phrase that is identified by the NER model. As shown in Algorithm~\ref{alg:entity linking}, each time we add the phrase $phr'$ containing $phr$ into the set of extended phrase $PX$ such that $|phr'| \le \theta$, where $|phr|$ is the number of words in $phr$, and $\theta$ is the maximum length of possible phrases. Then the original phrase and its extended phrases constitute the group of phrases $PX$. Since the phrases in $PX$ are extended based on the identical phrase, only one of them can be correct at most. Instead of determining which one phrase in $PX$ is correct directly, we resort to their mapping entities in the knowledge graph as discussed next.

\noindent  (2) \emph{\textbf{Multiple mapping entities}}. Finding the candidate mappings for each entity phrase by using DBpedia Lookup may return multiple entities. Generally, there is only one entity in $G$ that matches an entity phrase in the question. Hence, it is desired to select the correct one. To the end, we compute a matching score for each candidate entity, where the matching score between each entity phrase $phr$ and candidate entity $e$ is computed as shown in Equation~\ref{equ:matching score}.
\begin{equation}\label{equ:matching score}
ms(phr,e) = \alpha_1 \cdot imp(e) + \alpha_2 \cdot sim(phr,e) + \alpha_3 \cdot rel(q, evd(e))
\end{equation}
%

\begin{algorithm}[tb]
\caption{Entity Linking}
\label{alg:entity linking}
\textbf{Input}: Input question $q$ and knowledge graph $G$\\
\textbf{Output}: Entity $e_0$ matching a phrase $phr_0$ in $q$\\
\vspace{-0.18in}
\begin{algorithmic}[1] 
\STATE $EP \leftarrow$ identify all the entity phrases using the NER tool
\STATE $score \leftarrow 0$
\FOR{each phrase $phr$ in $EP$}
        \STATE $PX \leftarrow$ the phrases containing $phr$ whose length is not larger than $\theta$
        \STATE $PX \leftarrow PX \cup \{phr\}$
        \STATE $EC \leftarrow \emptyset$
        \FOR{each phrase $phr'$ in $PX$ }
            \IF{$phr'$ can match at least one entity in $G$}
                \STATE $EC \leftarrow EC \cup$ the candidate entities matching $phr'$
            \ENDIF
        \ENDFOR
        \FOR{each candidate entity $e$ in $EC$}
            \STATE compute the matching score $ms(phr,e)$ between $phr$ and $e$
        \ENDFOR
        \STATE $e' \leftarrow$ the entity with the largest matching score in $EC$
        \IF{$score < ms(phr,e')$}
            \STATE  $score \leftarrow ms(phr,e')$
            \STATE $e_0 \leftarrow e'$,  $phr_0 \leftarrow phr$
        \ENDIF
\ENDFOR

\STATE \textbf{return} $e_0$ and $phr_0$
\end{algorithmic}
\end{algorithm}

As defined above, the matching score consists of three components, i.e., the importance of the entity $e$ (denoted as $imp(e)$), the similarity between $phr$ and $e$ (denoted as $sim(phr,e)$), and the relevance between $q$ and the evidence text of $e$ (denoted as $rel(q,evd(e))$). They are formally defined as shown in Equations~(\ref{equ:imp})-(\ref{equ:relevance sen}). The parameters $\alpha_1$, $\alpha_2$, and $\alpha_3$ are weights of the three components, respectively.

\begin{equation}\label{equ:imp}
imp(e) = \frac{1}{rank(e)}
\end{equation}
where $rank(e)$ denotes the rank of $e$ among all the candidate mappings of $phr$ in terms of their term frequency in text corpus, e.g., Wikipedia documents. The principle is an entity is more important if it appears in more documents.

\begin{equation}\label{equ:sim phr e}
sim(phr,e) = \frac{1}{lev(phr,e)+1}
\end{equation}
where $lev(phr,e)$ can be computed with the widely used metric, Levenshtein distance \cite{DBLP:journals/Levenshtein}, for measuring the difference between two strings.

The relevance between $q$ containing the entity $e$ and the evidence text $evd(e)$ of $e$, e.g., the corresponding Wikipedia page $doc$ of entity $e$ can be taken as its evidence text. Then we can compute the similarity between the question $q$ and each sentence $s_i$ in $doc$ as shown in Equation~(\ref{equ:relevance sen}), where $vec(q)$ and $vec(s_i)$ denote the vector representations of $q$ and $s_i$, respectively. 
\begin{equation}\label{equ:relevance sen}
rel(q,evd(e)) = \arg max_{s_i \in doc} \frac{vec(q) \cdot vec(s_i)}{\| vec(q) \| \  \| vec(s_i) \|}
\end{equation}

Finally, the entity with largest matching score is returned.

\subsection{SQP-guided Query Graph Construction}\label{subsection:SQP-guided query formulation}

With the predicted structural query pattern $p$ and one identified entity $e$, we are ready to construct the query graph. The basic idea is instantiating the pattern $p$ through a data-driven search under the guidance of $p$. Specifically, the search starts from the entity node $e$, and retrieves a subgraph that contains $e$ and is structurally isomorphic to the structural query pattern by ignoring all the node/edge labels.
%
 In order to construct the query graph, two tasks should be completed, i.e., locate the position of $e$ in $p$ and query graph extension. 

\noindent \underline{\emph{Task 1}: \textit{Locate the position  of entity node $e$ in the pattern $p$}}.
Although both $p$ and $e$ can be obtained as discussed above, the position of the node $e$ in $p$ is unknown.
 To locate the position of entity $e$ in the pattern $p$, we introduce an important observation with the ``\emph{non-redundancy assumption}'': if the question $q$ has only one return variable, the words in $q$ are all helpful to depict the query intent. 

\begin{lemma}\label{lemma:intermediate node}
 The entity $e \in G$ identified for the question $q$ is not an intermediate node in the structural query pattern $p$.
\end{lemma}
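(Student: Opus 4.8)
The plan is to argue by contradiction using the non-redundancy assumption stated just before the lemma. Suppose that the identified entity $e$ occupies an intermediate node position in the structural query pattern $p$. Looking at the patterns $p_1$--$p_7$ in Figure~\ref{fig:structural query patterns}, an \emph{intermediate} node is one that lies strictly between two other nodes on a path of length at least two — that is, it has degree at least two and is neither a leaf (an entity/value endpoint) nor the return variable. The key structural fact I would use is that in every such pattern, the return variable is itself a leaf or a designated terminal node, while the intermediate node must be a variable (it cannot be $e$'s slot if $e$ is elsewhere, but here we are assuming $e$ sits there).

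First I would set up the dichotomy on the role of the intermediate node that $e$ would occupy. Since $e$ is a concrete entity from $G$ (not the answer variable), placing $e$ at an intermediate position forces the node(s) on the ``far side'' of $e$ — the ones reachable only by passing through $e$ — to be either the return variable or additional constants. If one of those far-side nodes is the return variable, then the subpath of $p$ from $e$ outward to that variable, together with the portion from $e$ back toward the rest of the query, would have to be jointly licensed by the words of $q$; but the branch hanging off $e$ on the side away from the answer contributes a constraint that, by the non-redundancy assumption, must correspond to words in $q$ that ``depict the query intent,'' and I would show this branch is in fact redundant — the entity $e$ alone already pins down everything downstream of it, so the extra hop is describing nothing new. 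This contradicts non-redundancy. The second case, where the far-side nodes are additional constants, is handled similarly: $e$ being intermediate means the query graph could be ``split'' at $e$ into two independently-anchored pieces, and one of these pieces is not needed to compute the answer, again contradicting that all words of $q$ are helpful.

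I would then note that the argument uses only two ingredients — (i) the enumerated shape of the admissible patterns, in which the answer variable is never an interior node of a path, and (ii) the non-redundancy assumption — and that it does not depend on which particular entity linking produced $e$. I expect the main obstacle to be making precise what ``helpful to depict the query intent'' means as a formal, checkable property of the query graph, since the lemma is really converting an informal linguistic assumption into a combinatorial constraint on node positions; the cleanest route is probably to define a query graph to be \emph{non-redundant} exactly when deleting $e$ together with any maximal subtree attached only through $e$ and not containing the return variable changes the answer set, and then to observe that if $e$ were intermediate such a deletable subtree would have to exist. A secondary, more routine obstacle is a careful case check over $p_1$--$p_7$ to confirm that the only non-leaf, non-answer nodes are exactly the ones the argument classifies as intermediate, which is a finite verification against Figure~\ref{fig:structural query patterns}.
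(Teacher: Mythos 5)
Your proposal takes essentially the same route as the paper: both argue by contradiction that if $e$ were an intermediate node of the (tree-shaped) pattern, the branch hanging off $e$ away from the return variable would consist only of constants, would impose no constraint on the answer variable, and would therefore render the corresponding words of $q$ useless, contradicting the non-redundancy assumption. The paper states this more tersely --- via a single constant-constant triple $\langle e, r, x\rangle$ rather than your maximal-subtree formulation --- and, like you, leaves the non-redundancy assumption at an informal level.
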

\begin{proof}
  The underlying rationale is that the question will contain useless words if $e$ is an intermediate node in $p$. The proof can be achieved by contradiction. Let us assume that $e$ is an intermediate node. It will lead to a triple $\langle$$e, r , x$$\rangle$ at least, where $x$ is an entity or a literal string, and $r$ is the incident relation. Clearly, this triple contributes nothing to restraining the variable in other triples as they are all constant nodes. It indicates that the node $x$ and relation $r$ are useless to specify the answers, which contradicts the \emph{non-redundancy assumption} aforementioned.
\end{proof}

Lemma~\ref{lemma:intermediate node} works under the premise that the query graph of a given question does not contain any cycles.
 Note that all the query graphs in the two benchmarks used in the paper are trees. Furthermore, the answers can be retrieved even if the corresponding query graph is not a tree since the tree has fewer constraints than a graph. Then we can refine the answers according to the information in the question that is not covered by the tree pattern.

\begin{example}
 Assume that $p$ is the third pattern, i.e.,  $p = p_2$, in Figure~\ref{fig:structural query patterns} and $e$ is the intermediate node. Thus one of the two leaf nodes represents the return variable, and the other node will be an entity $e'$ or a literal string $l$. It suggests that there is a triple $\langle$$e, r_1 , e'$$\rangle$, $\langle$$e', r_1 , e$$\rangle$, or $\langle$$e, r_1 , l$$\rangle$\footnote{The literal node cannot be a starting node of an edge in knowledge graphs.}, where $r_1$ is the adjacent relation to $e'$ or $l$. As both $e$ and $e'$ (resp.~$l$) are specified entities (resp.~literal string), the three triples will contribute nothing to restraining the variable in the other triple $\langle$$?x, r_2 , e$$\rangle$ or $\langle$$e, r_2 , ?x$$\rangle$, where $r_2$ is the incident relation to the variable node $?x$. Hence, we can conclude that $e$ is not an intermediate node in $p$. The analysis holds for other patterns as well.
\end{example}

 As a supporting proof through real data analysis, we find that all the entities are not intermediate nodes in the benchmarks LC-QuAD and QALD.

\noindent \underline{\textit{Task 2: Query graph extension.}} With the entity and its position in $p$, we build the query graph in this task. The main principle is extending the query graph (initially just an entity node e) gradually by including relations, entities, or variables to $p$ under the help of the structure in $p$. The expanding procedure is depicted in Algorithm~\ref{alg:query graph extension}. Note that we select the pattern with the largest confidence score for simplicity.

\begin{algorithm}[tb]
\caption{Query graph extension}
\label{alg:query graph extension}
\textbf{Input}: Entity $e$, input question $q$, structural query pattern $p$, and knowledge graph $G$\\
\textbf{Output}: Query graph $Q$ for $q$
\begin{algorithmic}[1] 
\STATE $Q \leftarrow p$, \hspace{0.1in} $cn \leftarrow e$ 
\STATE $NS \leftarrow$ non-intermediate nodes in $p$
\WHILE{$Q$ has unlabeled edges or nodes}
    \STATE $R \leftarrow \emptyset$.
    \IF{the nodes $NS$ in $p$ have outgoing neighbors}\label{line:alg-line1}
        \STATE $R \leftarrow$  $R$ $\cup$ outgoing relations of $cn$.
    \ENDIF
    \IF{the nodes $NS$ in $p$ have incoming neighbors}
        \STATE $R \leftarrow$  $R$ $\cup$ incoming relations of $cn$.
    \ENDIF \label{line:alg-line2}
    \STATE $r \leftarrow$ the relation in $R$ that is the most relevant to $q$. \label{line:alg-line3}
    \IF{$cn$ corresponds to an entity phrase in $q$}
        \STATE assemble $cn$ and $r$ into $Q$.
    \ELSE
        \STATE assemble variable and $r$ into $Q$.
    \ENDIF

    \STATE $NS \leftarrow$ the unexplored node adjacent to explored structure $Q_L$.
    \STATE $cn \leftarrow$ the entities corresponding to $NS$ that are adjacent to $Q_L$ in $G$.
\ENDWHILE

\STATE \textbf{return} $Q$
\end{algorithmic}
\end{algorithm}

Since a structural query pattern may contain multiple non-intermediate nodes, it is not trivial to determine the correct one. We propose an extension procedure following a data-driven manner. Initially, we make a copy $Q$ of the structural query pattern $p$. If the non-intermediate nodes in $p$ have both incoming edges and outgoing edges, e.g., \textit{SQP}s 1, 2, 6, 7, 9 and 10 in Figure~\ref{fig:structural query patterns}, the incoming relations and outgoing relations of entity $e$ will be collected. Otherwise, we just need to consider the incoming or outgoing relations (lines~\ref{line:alg-line1}-\ref{line:alg-line2} in Algorithm~\ref{alg:query graph extension}). Then we compute the relevance between each candidate relation $r$ and $q$. The relation with the largest relevance is selected. As a relation $r$ may be composed of multiple words, e.g., dateOfBirth, it should be split to get a sequence of words. Then the relevance $rel(q,r)$ between $q$ and each candidate relation $r$ can be calculated by Equation~(\ref{equ:relevance}), where $\lambda$ is a weight ranging from 0 to 1, $q_i$ and $r_j$ represents the $i$th and $j$th words in $q$ and $r$, respectively.
%
\begin{equation}\label{equ:relevance}
\resizebox{.9\linewidth}{!}{$
    \displaystyle
     rel(q,r) = \sum_{i=1}^{|q|} \sum_{j=1}^{|r|} \lambda \cdot cos(q_i,r_j) + (1-\lambda) \cdot \frac{1}{lev(q_i,r_j)+1}
$}
\end{equation}%

\begin{table*}[tp]
 \centering
\begin{small}
  \caption{The performance when only the query graphs are generated.}  \label{tab:query graph}
  \begin{threeparttable}
    \begin{tabular}{lccccccccc}
    \toprule
    \multirow{2}{*}{Method}&
    \multicolumn{3}{c}{ LC-QuAD}&\multicolumn{3}{c}{ QALD-8}&\multicolumn{3}{c}{ QALD-9}\cr
    \cmidrule(lr){2-4} \cmidrule(lr){5-7}\cmidrule(lr){8-10}
    &Precision&Recall&F1-Measure& \hspace{0.1in} Precision&Recall&F1-Measure &Precision&Recall&F1-Measure\cr
    \midrule
    Frankenstein & 0.480  & 0.490 & 0.485 & - & - & - & - & - & - \cr
    qaSearch & 0.357  & 0.336 & 0.344 & 0.243 & 0.243 & 0.243  & 0.198 & 0.191 & 0.193 \cr
    \textit{qaSQP}    & 0.748  & 0.704 & 0.718 & 0.439 & 0.439 & 0.439  & 0.401 & 0.413 & 0.405\cr
    \textit{qaSQP-CE}    & {0.835}  & {0.813} & {0.827} & {0.558} & {0.663} & {0.620}  & 0.522 & 0.625 & 0.568  \cr
    \bottomrule
    \end{tabular}
    \end{threeparttable}
  \end{small}
\end{table*}

As shown in Equation~(\ref{equ:relevance}), we use two metrics to measure the relevance between two words $w_1$ and $w_2$. The first one is the cosine score $cos(w_1, w_2)$ between two vectors of $w_1$ and $w_2$ which are obtained by training the glove data with the model of word2vec. Two words are semantically closer to each other if their cosine score is larger. The other metric is Levenshtein distance $lev(w_1, w_2)$ that calculates the edit cost between two words.
After obtaining the relation $r$ that is the most relevant to $q$ as shown in line~\ref{line:alg-line3} of Algorithm~\ref{alg:query graph extension}. The specific position of $e$ can be determined according to the direction of $r$. Then we can include $e$ and $r$ into $Q$. Taking the node and entities adjacent to the subgraph $Q_L$ (that has been labeled with entities, variables, and relations currently) as a starting node, the extension procedure proceeds iteratively until all the nodes and edges have been labeled. Finally, the query graph $Q$ is returned.

\subsection{Constraint Augmentation}

Actually, executing the query graph above can return a list of answers which contain the correct one. However, it may generate undesired entities or values as a question may put some constraints on the query graph. For instance, the question ``\textit{What is the highest mountain in Italy?}'' specifies the ordinal constraint ``highest'' on mountains.

We divide the constraints into 4 categories as follows:
\begin{itemize}
  \item answer-type constraint, e.g., ``which actor'';
  \item ordinal constraint, e.g., ``highest'';
  \item aggregation constraint, e.g., ``how many'';
  \item comparative constraint, e.g., ``larger than''.
\end{itemize}

Similar to the approaches \cite{DBLP:conf/acl/YihCHG15,DBLP:conf/coling/BaoDYZZ16}, we employ simple rules to detect these constraints and augment them to the query graph.

\section{Experiments}\label{sec:experiments}
In this section, we evaluate the proposed method systematically and compare it with the existing algorithms.

\subsection{Datasets and Experimental Settings}\label{subsec:datasets}
We use DBpedia \cite{DBLP:conf/semweb/AuerBKLCI07} as the target knowledge graph. DBpedia is an open-domain knowledge graph that consists of 6 million entities and 1.3 billion triples as reported in the statistics of DBpedia 2016-10 .

Two question answering benchmarks LC-QuAD \cite{DBLP:conf/semweb/TrivediMDL17} and QALD \cite{DBLP:conf/esws/UsbeckNHKRN17} delivered over DBpedia are used to evaluate our proposed approach.

{\small
\begin{itemize}
    \item LC-QuAD is a gold standard question answering dataset that contains 5000 pairs of natural language questions and
SPARQL queries, 728 of which are simple questions with single relation and single entity.
    \item QALD-8 \cite{DBLP:conf/semweb/UsbeckNCRN18} and QALD-9 \cite{DBLP:conf/semweb/UsbeckGN018}. QALD is a long-running question-answering evaluation campaign. It provides a set of natural language questions, the corresponding SPARQL queries and answers. QALD-8 contains 219 training questions and 42 test questions. QALD-9 contains 408 training questions and 150 test questions. 
\end{itemize}
}


We randomly select 500 questions from LC-QuAD as the test data.
Our models are trained for 100 epochs, with early stopping enabled based on validation accuracy. We use the 80-20 split as train and validation data.

  In RNN Attention model, we set the dimensionality of character embedding to 128 and dropout keep probability to 0.5. The number of hidden units is 128. The number of attention units is 128. The number of hidden size is 1.
In Text CNN model, we set dimensionality of character embedding to 128. The number of filters per filter is 128. The dropout keep probability is 0.5. L2 regularization lambda is 0.001.


Following conventions as shown in gAnswer2 \cite{DBLP:journals/tkde/Hu0YWZ18},
 the macro precision, recall, and F1-measure are used to evaluate the performance.
 We compare our method, denoted by \emph{\textbf{qaSQP}}, with 
Frankenstein \cite{DBLP:conf/www/SinghRBSLUVKP0V18}, QAKIS \cite{DBLP:conf/esws/CabrioCGH13}, QASystem \cite{DBLP:conf/semweb/UsbeckGN018}, TeBaQA \cite{DBLP:conf/semweb/UsbeckGN018}, WDAqua \cite{DBLP:conf/esws/UsbeckNHKRN17}, gAnswer2 \cite{DBLP:journals/tkde/Hu0YWZ18}, and qaSearch, where qaSearch constructs the query graph following a data-driven search rather than using the \emph{SQP} as guidance

\subsection{Experimental Results}\label{subsec:results}

\noindent  \textbf{\emph{Comparing with the previous methods}}.
Table~\ref{tab:query graph} presents the performance in terms of generated query graphs on the three datasets, where 
\textit{qaSQP-CE} represents the proposed method that is fed by one correctly identified entity initially. As can be seen from the table, our proposed method outperforms the existing approach by a large margin 17.4\% absolute gain on LC-QuAD. The performance improves further if our system is given one correctly identified entity in DBpedia. The performance on QALD-8 and QALD-9 get worse than that on LC-QuAD. There are two main reasons: (1) QALD-8 and QALD-9 provide less training data; (2) QALD-8 and QALD-9 are more challenging as several questions are outside the scope of the system. Further analysis is discussed in the next subsection.

Table~\ref{tab:QALD8} and Table~\ref{tab:QALD9} report the question answering results on QALD-8 and QALD-9, respectively\footnote{The results of other methods are obtained from the result reports \cite{DBLP:conf/semweb/UsbeckNCRN18} and \cite{DBLP:conf/semweb/UsbeckGN018}.}. It is clear that the proposed method \textit{qaSQP} outperforms the state-of-the-art competitors greatly. 
 Basically, it benefits from the novel framework of answering questions. Specifically, the query graph is easy to retrieve by reducing the search space under the guidance of the recognized query sketch and one identified entity from the question. In contrast, the competitors are unaware of the query sketch, which will increase the difficulty in constructing the correct query graphs and retrieving the answers. For instance, the method \emph{qaSearch} performs much worse than \emph{qaSQP}, which confirms the superiority of \emph{SQP}-based framework.

\begin{table}
\centering
\begin{small}
\caption{Question answering results on QALD-8}
\begin{tabular}{lccc}
\toprule
Method  & \hspace{0.051in} Precision \hspace{0.051in} &\hspace{0.051in} Recall\hspace{0.051in} & F1-Measure \\
\midrule
QAKIS        & 0.061  & 0.053 &  0.056\\
WDAqua-core0 & 0.391  & 0.407 &  0.387     \\
gAnswer2     & 0.386  & 0.390 &  0.388     \\
qaSearch     & 0.244  & 0.244 &  0.244     \\
\textit{qaSQP}        & \textbf{0.459}  & \textbf{0.463} &  \textbf{0.461}      \\
\bottomrule
\end{tabular}
\label{tab:QALD8}
\end{small}
\end{table}

\begin{table}
\centering
\begin{small}
\caption{Question answering results on QALD-9}
\begin{tabular}{lccc}
\toprule
Method  & \hspace{0.051in} Precision \hspace{0.051in} &\hspace{0.051in} Recall\hspace{0.051in} & F1-Measure \\
\midrule
Elon         & 0.049  & 0.053 &  0.050 \\
QASystem     & 0.097  & 0.116 &  0.098 \\
TeBaQA       & 0.129  & 0.134 &  0.130 \\
WDAqua-core1 & 0.261  & 0.267 &  0.250     \\
gAnswer2     & 0.293  & 0.327 &  0.298     \\
qaSearch     & 0.236  & 0.241 &  0.237     \\
\textit{qaSQP}        & \textbf{0.458}  & \textbf{0.471} & \textbf{0.463}      \\
\bottomrule
\end{tabular}
\label{tab:QALD9}
\end{small}
\end{table}

\noindent \textbf{\emph{Evaluation of prediction models}}.
 Since a key component of the syetem is introducing the structural query patterns, the models that predict the \emph{SQP} are very important. So we study the performance of these prediction models. As presented in Table~\ref{tab:evaluation of prediction models}, the ensemble model outperforms the two individual models RNN-Attention and Text-CNN in terms of precision, recall, and F1 score on both QALD-8 and QALD-9. It means that the proposed ensemble model is effective.

\begin{table}
\vspace{-0.15in}
\centering
\begin{small}
\caption{Results of prediction models}
\begin{tabular}{lccc}
\toprule
Method (on dataset)  & Precision  & Recall & F1-Measure \\
\midrule
RNN-Attention (QALD-8)      & {0.82}  & 0.83 &  0.82     \\
Text-CNN (QALD-8)           & 0.79  & 0.78 &  0.78     \\
Ensemble model (QALD-8)     & \textbf{0.82}  & \textbf{0.86} &  \textbf{0.84}      \\
\hline
RNN-Attention (QALD-9)      & 0.83  & 0.78 &  0.80     \\
Text-CNN (QALD-9)           & 0.78  & 0.72 &  0.75     \\
Ensemble model (QALD-9)     & \textbf{0.85}  & \textbf{0.79 }&  \textbf{0.82}      \\
\bottomrule
\end{tabular}
\label{tab:evaluation of prediction models}
\end{small}
\end{table}

\noindent \textbf{\emph{Effect of SQP recognition and entity linking}}.
The modules of \emph{SQP recognition} and \emph{entity linking} are very critical in the proposed system. However, they are not guaranteed to produce the correct \emph{SQP} patterns or mapping entities. In order to study their effect and the boundary of the question answering ability, we conduct the experiments by providing the correct structural query patterns or mapping entities. Let \emph{qaSQP-CP} denote the method that is fed by the correct \emph{SQP}. Let \emph{qaSQP-CE} denote the method that is fed by one correctly identified entity initially.

As shown in Table~\ref{tab:effect of eqp and el}, all the methods equipped with correct \emph{SQPs} or entities outperform the original method \emph{qaSQP}. Note that the results on LC-QuAD are reported with respect to the performance on constructed structural query patterns. We observe that the improvement gained by \emph{qaSQP-CP} is subtle. Moreover, we can find that \emph{qaSQP-CE} performs much better than \emph{qaSQP-CP} on both QALD-8 and QALD-9. It indicates that the system \emph{qaSQP} can almost find the correct structural query patterns. Meanwhile, there is still much room to improve the initial entity linking.
\begin{table}
\centering
\begin{small}
\caption{Effect of \textit{SQP} recognition and entity lining}
\begin{tabular}{lccc}
\toprule
Method (on dataset) &  Precision  &  Recall  & F1-Measure \\
\midrule
\textit{qaSQP-CP} (LC-QuAD)     & 0.774  & 0.731 &  0.744     \\
\textit{qaSQP-CE} (LC-QuAD)     & 0.835  & 0.813 &  0.827     \\
\hline
\textit{qaSQP-CP} (QALD-8)     & 0.463  & 0.488 &  0.476     \\
\textit{qaSQP-CE} (QALD-8)     & 0.537  & 0.561 &  0.549     \\
\hline
\textit{qaSQP-CP} (QALD-9)     &  0.463  & 0.467 & 0.465      \\
\textit{qaSQP-CE} (QALD-9)    & {0.488}  & {0.502} & {0.493}     \\
\bottomrule
\end{tabular}
\label{tab:effect of eqp and el}
\end{small}
\end{table}

\noindent \textbf{\emph{Effect of the number of returned patterns $k$}}.
We also study the effect of the number of returned patterns, denoted by $k$, of the prediction model. Figures~\ref{fig:effect of k 8} and \ref{fig:effect of k 9} depict the results on QALD-8 and QALD-9, respectively. The parameter $k$ is varied from 1 to 3. As shown in the two figures, the precision, recall, and F1 score tend to be stable when $k$ is 2 and 3. Hence, $k$ is set to 2 by default in our experiments.

%

\begin{figure}[t]
  \centering
  \subfigure[Results on QALD-8]{
    \label{fig:effect of k 8} 
    \includegraphics[scale=0.3 ]{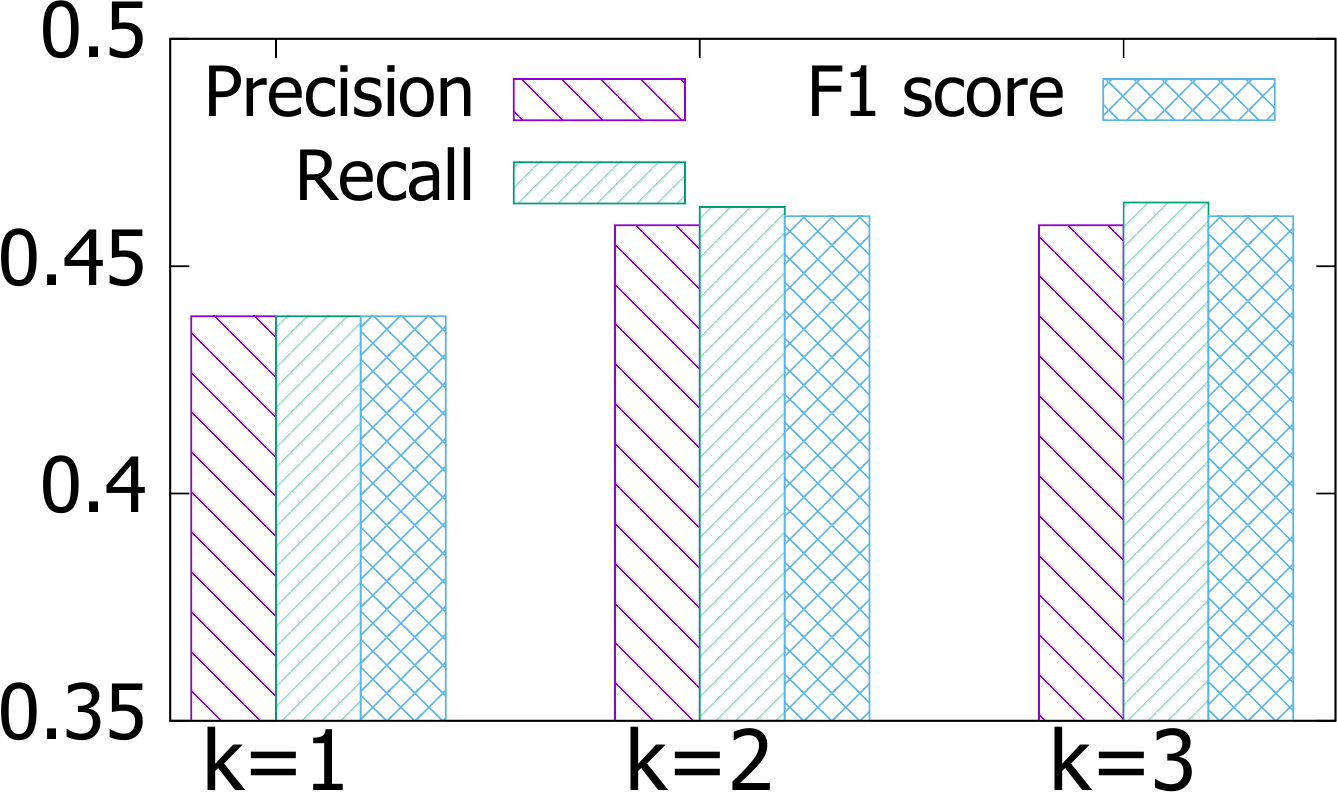}}
  \subfigure[Results on QALD-9]{
    \label{fig:effect of k 9} 
    \includegraphics[scale=0.3 ]{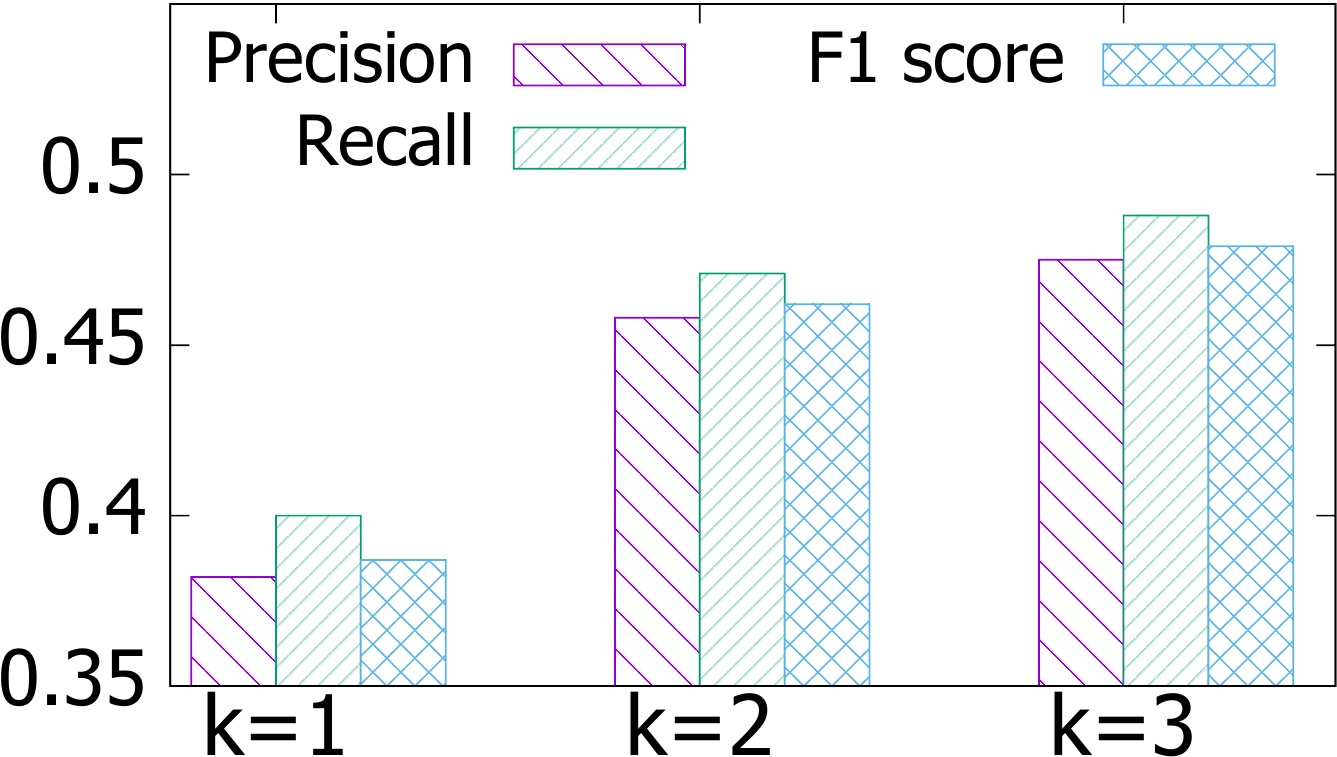}}
    \vspace{-0.15in}
  \caption{Effect of the number of returned patterns}
  \label{fig:effect of k} 
  \vspace{-0.05in}
\end{figure}

\noindent \textbf{\emph{Error Analysis}}.
Although our approach substantially outperforms existing methods, there is still much space for improving the performance on QALD-8 and QALD-9. For instance, besides the errors (29\%) caused by entity linking, the precision of predicted structural query patterns is 82\% for the test questions in QALD-8. Moreover, many questions in QALD-8 are very challenging. We find that 12 of 42 questions in QALD-8 leave out some important information or require external knowledge to find the correct answers, which increases the difficulty of answering for a system (41\%). For example, the question ``How big is the earth's diameter?'' cannot be answered directly since there is only a property ``meanRadius'' in DBpeida. To answer this question, the external knowledge that the diameter is two times the radius is required. The correct SPARQL query should be \textit{``select distinct (xsd:double(?radius)*2 AS ?diameter) where { res:Earth dbo:meanRadius ?radius. }''}. 17\% of the errors are caused by
incorrect label assignments in query graph extension.

\section{Conclusion and Future Work}\label{sec:conclusion}

In this paper, we focus on constructing query graphs for answering natural language questions over a knowledge graph. Unlike previous methods, we propose a novel framework based on structural query patterns. Specifically, we define structural query patterns that just capture the structural representations of input questions. Under the guidance of structural query patterns, the query graphs can be formulated. 
 Our experiments show that the proposed approach outperforms the competitors significantly in terms of building query graphs and generating answers. In the future, we will explore how to eliminate the effect of entity linking throughout the whole system. Applying structured learning techniques to SQP generation will also be investigated.


\bibliographystyle{aaai}
\bibliography{icde_sqp}

\begin{thebibliography}{}

\bibitem[\protect\citeauthoryear{Abujabal \bgroup et al\mbox.\egroup
  }{2017}]{DBLP:conf/www/AbujabalYRW17}
Abujabal, A.; Yahya, M.; Riedewald, M.; and Weikum, G.
\newblock 2017.
\newblock Automated template generation for question answering over knowledge
  graphs.
\newblock In {\em WWW}.

\bibitem[\protect\citeauthoryear{Auer \bgroup et al\mbox.\egroup
  }{2007}]{DBLP:conf/semweb/AuerBKLCI07}
Auer, S.; Bizer, C.; Kobilarov, G.; Lehmann, J.; Cyganiak, R.; and Ives, Z.~G.
\newblock 2007.
\newblock Dbpedia: {A} nucleus for a web of open data.
\newblock In {\em {ISWC}}.

\bibitem[\protect\citeauthoryear{Bao \bgroup et al\mbox.\egroup
  }{2016}]{DBLP:conf/coling/BaoDYZZ16}
Bao, J.; Duan, N.; Yan, Z.; Zhou, M.; and Zhao, T.
\newblock 2016.
\newblock Constraint-based question answering with knowledge graph.
\newblock In {\em {COLING}}.

\bibitem[\protect\citeauthoryear{Berant and
  Liang}{2014}]{DBLP:conf/acl/BerantL14}
Berant, J., and Liang, P.
\newblock 2014.
\newblock Semantic parsing via paraphrasing.
\newblock In {\em {ACL}}.

\bibitem[\protect\citeauthoryear{Berant \bgroup et al\mbox.\egroup
  }{2013}]{DBLP:conf/emnlp/BerantCFL13}
Berant, J.; Chou, A.; Frostig, R.; and Liang, P.
\newblock 2013.
\newblock Semantic parsing on freebase from question-answer pairs.
\newblock In {\em {EMNLP}}.

\bibitem[\protect\citeauthoryear{Cabrio \bgroup et al\mbox.\egroup
  }{2013}]{DBLP:conf/esws/CabrioCGH13}
Cabrio, E.; Cojan, J.; Gandon, F.; and Hallili, A.
\newblock 2013.
\newblock Querying multilingual dbpedia with qakis.
\newblock In {\em ESWC},  194--198.

\bibitem[\protect\citeauthoryear{Chen and
  Manning}{2014}]{DBLP:conf/emnlp/ChenM14}
Chen, D., and Manning, C.~D.
\newblock 2014.
\newblock A fast and accurate dependency parser using neural networks.
\newblock In {\em EMNLP}.

\bibitem[\protect\citeauthoryear{Cui, Xiao, and
  Wang}{2016}]{DBLP:conf/ijcai/CuiXW16}
Cui, W.; Xiao, Y.; and Wang, W.
\newblock 2016.
\newblock {KBQA:} an online template based question answering system over
  freebase.
\newblock In {\em {IJCAI}}.

\bibitem[\protect\citeauthoryear{Gardner \bgroup et al\mbox.\egroup
  }{2018}]{DBLP:conf/acl/GardnerDISZ18}
Gardner, M.; Dasigi, P.; Iyer, S.; Suhr, A.; and Zettlemoyer, L.
\newblock 2018.
\newblock Neural semantic parsing.
\newblock In {\em {ACL} Tutorial Abstracts}.

\bibitem[\protect\citeauthoryear{Hu \bgroup et al\mbox.\egroup
  }{2018}]{DBLP:journals/tkde/Hu0YWZ18}
Hu, S.; Zou, L.; Yu, J.~X.; Wang, H.; and Zhao, D.
\newblock 2018.
\newblock Answering natural language questions by subgraph matching over
  knowledge graphs.
\newblock {\em {IEEE} Trans. Knowl. Data Eng.} 30(5).

\bibitem[\protect\citeauthoryear{Kim}{2014}]{DBLP:conf/emnlp/Kim14}
Kim, Y.
\newblock 2014.
\newblock Convolutional neural networks for sentence classification.
\newblock In {\em EMNLP}.

\bibitem[\protect\citeauthoryear{Kwiatkowski \bgroup et al\mbox.\egroup
  }{2010}]{DBLP:conf/emnlp/KwiatkowksiZGS10}
Kwiatkowski, T.; Zettlemoyer, L.~S.; Goldwater, S.; and Steedman, M.
\newblock 2010.
\newblock Inducing probabilistic {CCG} grammars from logical form with
  higher-order unification.
\newblock In {\em {EMNLP}}.

\bibitem[\protect\citeauthoryear{Kwiatkowski \bgroup et al\mbox.\egroup
  }{2013}]{DBLP:conf/emnlp/KwiatkowskiCAZ13}
Kwiatkowski, T.; Choi, E.; Artzi, Y.; and Zettlemoyer, L.~S.
\newblock 2013.
\newblock Scaling semantic parsers with on-the-fly ontology matching.
\newblock In {\em {EMNLP}}.

\bibitem[\protect\citeauthoryear{Lample \bgroup et al\mbox.\egroup
  }{2016}]{DBLP:conf/naacl/LampleBSKD16}
Lample, G.; Ballesteros, M.; Subramanian, S.; Kawakami, K.; and Dyer, C.
\newblock 2016.
\newblock Neural architectures for named entity recognition.
\newblock In {\em HLT-NAACL}.

\bibitem[\protect\citeauthoryear{Levenshtein}{1966}]{DBLP:journals/Levenshtein}
Levenshtein, V.~I.
\newblock 1966.
\newblock Binary codes capable of correcting deletions, insertions, and
  reversals.
\newblock {\em Soviet Physics Doklady} 10:707--710.

\bibitem[\protect\citeauthoryear{Reddy, Lapata, and
  Steedman}{2014}]{DBLP:journals/tacl/ReddyLS14}
Reddy, S.; Lapata, M.; and Steedman, M.
\newblock 2014.
\newblock Large-scale semantic parsing without question-answer pairs.
\newblock {\em {TACL}} 2.

\bibitem[\protect\citeauthoryear{Ruseti \bgroup et al\mbox.\egroup
  }{2015}]{DBLP:conf/clef/RusetiMRT15}
Ruseti, S.; Mirea, A.; Rebedea, T.; and Trausan{-}Matu, S.
\newblock 2015.
\newblock Qanswer - enhanced entity matching for question answering over linked
  data.
\newblock In {\em CLEF}.

\bibitem[\protect\citeauthoryear{Singh \bgroup et al\mbox.\egroup
  }{2018}]{DBLP:conf/www/SinghRBSLUVKP0V18}
Singh, K.; Radhakrishna, A.~S.; Both, A.; Shekarpour, S.; Lytra, I.; Usbeck,
  R.; Vyas, A.; Khikmatullaev, A.; Punjani, D.; Lange, C.; Vidal, M.; Lehmann,
  J.; and Auer, S.
\newblock 2018.
\newblock Why reinvent the wheel: Let's build question answering systems
  together.
\newblock In {\em WWW}.

\bibitem[\protect\citeauthoryear{Trivedi \bgroup et al\mbox.\egroup
  }{2017}]{DBLP:conf/semweb/TrivediMDL17}
Trivedi, P.; Maheshwari, G.; Dubey, M.; and Lehmann, J.
\newblock 2017.
\newblock Lc-quad: {A} corpus for complex question answering over knowledge
  graphs.
\newblock In {\em {ISWC}}.

\bibitem[\protect\citeauthoryear{Unger \bgroup et al\mbox.\egroup
  }{2012}]{DBLP:conf/www/UngerBLNGC12}
Unger, C.; B{\"{u}}hmann, L.; Lehmann, J.; Ngomo, A.~N.; Gerber, D.; and
  Cimiano, P.
\newblock 2012.
\newblock Template-based question answering over {RDF} data.
\newblock In {\em WWW}.

\bibitem[\protect\citeauthoryear{Usbeck \bgroup et al\mbox.\egroup
  }{2017}]{DBLP:conf/esws/UsbeckNHKRN17}
Usbeck, R.; Ngomo, A.~N.; Haarmann, B.; Krithara, A.; R{\"{o}}der, M.; and
  Napolitano, G.
\newblock 2017.
\newblock 7th open challenge on question answering over linked data.
\newblock In {\em ESWC}.

\bibitem[\protect\citeauthoryear{Usbeck \bgroup et al\mbox.\egroup
  }{2018a}]{DBLP:conf/semweb/UsbeckGN018}
Usbeck, R.; Gusmita, R.~H.; Ngomo, A.~N.; and Saleem, M.
\newblock 2018a.
\newblock 9th challenge on question answering over linked data {(QALD-9)}.
\newblock In {\em International Semantic Web Conference},  58--64.

\bibitem[\protect\citeauthoryear{Usbeck \bgroup et al\mbox.\egroup
  }{2018b}]{DBLP:conf/semweb/UsbeckNCRN18}
Usbeck, R.; Ngomo, A.~N.; Conrads, F.; R{\"{o}}der, M.; and Napolitano, G.
\newblock 2018b.
\newblock 8th challenge on question answering over linked data {(QALD-8)}.
\newblock In {\em International Semantic Web Conference},  51--57.

\bibitem[\protect\citeauthoryear{Wong and Mooney}{2007}]{DBLP:conf/acl/WongM07}
Wong, Y.~W., and Mooney, R.~J.
\newblock 2007.
\newblock Learning synchronous grammars for semantic parsing with lambda
  calculus.
\newblock In {\em {ACL}}.

\bibitem[\protect\citeauthoryear{Yang \bgroup et al\mbox.\egroup
  }{2016}]{DBLP:conf/naacl/YangYDHSH16}
Yang, Z.; Yang, D.; Dyer, C.; He, X.; Smola, A.~J.; and Hovy, E.~H.
\newblock 2016.
\newblock Hierarchical attention networks for document classification.
\newblock In {\em NAACL}.

\bibitem[\protect\citeauthoryear{Yih \bgroup et al\mbox.\egroup
  }{2015}]{DBLP:conf/acl/YihCHG15}
Yih, W.; Chang, M.; He, X.; and Gao, J.
\newblock 2015.
\newblock Semantic parsing via staged query graph generation: Question
  answering with knowledge base.
\newblock In {\em ACL}.

\bibitem[\protect\citeauthoryear{Zettlemoyer and
  Collins}{2005}]{DBLP:conf/uai/ZettlemoyerC05}
Zettlemoyer, L.~S., and Collins, M.
\newblock 2005.
\newblock Learning to map sentences to logical form: Structured classification
  with probabilistic categorial grammars.
\newblock In {\em UAI}.

\bibitem[\protect\citeauthoryear{Zheng \bgroup et al\mbox.\egroup
  }{2015}]{DBLP:conf/sigmod/ZhengZLYSZ15}
Zheng, W.; Zou, L.; Lian, X.; Yu, J.~X.; Song, S.; and Zhao, D.
\newblock 2015.
\newblock How to build templates for {RDF} question/answering: An uncertain
  graph similarity join approach.
\newblock In {\em {SIGMOD}}.

\bibitem[\protect\citeauthoryear{Zheng \bgroup et al\mbox.\egroup
  }{2018}]{DBLP:journals/pvldb/ZhengYZC18}
Zheng, W.; Yu, J.~X.; Zou, L.; and Cheng, H.
\newblock 2018.
\newblock Question answering over knowledge graphs: Question understanding via
  template decomposition.
\newblock {\em {PVLDB}} 11(11).

\bibitem[\protect\citeauthoryear{Zou \bgroup et al\mbox.\egroup
  }{2014}]{DBLP:conf/sigmod/ZouHWYHZ14}
Zou, L.; Huang, R.; Wang, H.; Yu, J.~X.; He, W.; and Zhao, D.
\newblock 2014.
\newblock Natural language question answering over {RDF:} a graph data driven
  approach.
\newblock In {\em SIGMOD}.

\end{thebibliography}

\end{document}